\newtheorem{prop}{Proposition}
\newcommand{\algname}{$\textrm{EX}^2$}
\newcommand{\KL}[2]{D_{KL}(#1||#2)}
\newcommand{\delEx}{{\delta_{x^*}}}
\newcommand{\pX}{{p_{\mathcal{X}}}}
\newcommand{\ptilde}{\widetilde{p}}
\title{$\textrm{EX}^2$: Exploration with Exemplar Models for Deep Reinforcement Learning}
\author{Justin Fu$^{*}$ \hspace{5mm} John D. Co-Reyes\thanks{equal contribution.} \hspace{0.3mm} \hspace{5mm}   Sergey Levine \\
University of California Berkeley \\
\texttt{\{justinfu,jcoreyes,svlevine\}@eecs.berkeley.edu}
}
\begin{document}

\maketitle

\begin{abstract}
Deep reinforcement learning algorithms have been shown to learn complex tasks using highly general policy classes. However, sparse reward problems remain a significant challenge. Exploration methods based on novelty detection have been particularly successful in such settings but typically require generative or predictive models of the observations, which can be difficult to train when the observations are very high-dimensional and complex, as in the case of raw images. We propose a novelty detection algorithm for exploration that is based entirely on discriminatively trained exemplar models, where classifiers are trained to discriminate each visited state against all others. Intuitively, novel states are easier to distinguish against other states seen during training. We show that this kind of discriminative modeling corresponds to implicit density estimation, and that it can be combined with count-based exploration to produce competitive results on a range of popular benchmark tasks, including state-of-the-art results on challenging egocentric observations in the vizDoom benchmark.
\end{abstract}

\section{Introduction}
Recent work has shown that methods that combine reinforcement learning with rich function approximators, such as deep neural networks, can solve a range of complex tasks, from playing Atari games \citep{mnih-dqn-2015} to controlling simulated robots \citep{trpo-schulman-16}. Although deep reinforcement learning methods allow for complex policy representations, they do not by themselves solve the exploration problem: when the reward signals are rare and sparse, such methods can struggle to acquire meaningful policies. Standard exploration strategies, such as $\epsilon$-greedy strategies \citep{mnih-dqn-2015} or Gaussian noise \citep{ddpg}, are undirected and do not explicitly seek out interesting states. A promising avenue for more directed exploration is to explicitly estimate the novelty of a state, using predictive models that generate future states~\citep{Schmidhuber:1991:PIC:116517.116542,exploration-stadie-15,surprise-achiam-16} or model state densities~\citep{unifying-bellemare-16,hashexp-tang-16,gradient-boosting-Abel-16}. Related concepts such as count-based bonuses have been shown to provide substantial speedups in classic reinforcement learning~\citep{mbieeb-strehl-08, bayesexp-kolter-09}, and several recent works have proposed information-theoretic or probabilistic approaches to exploration based on this idea \citep{vime-houthooft-16,Intrinsic} by drawing on formal results in simpler discrete or linear systems \citep{regret_analysis}. However, most novelty estimation methods rely on building generative or predictive models that explicitly model the distribution over the current or next observation. When the observations are complex and high-dimensional, such as in the case of raw images, these models can be difficult to train, since generating and predicting images and other high-dimensional objects is still an open problem, despite recent progress \citep{DBLP:conf/nips/SalimansGZCRCC16}. Though successful results with generative novelty models have been reported with simple synthetic images, such as in Atari games \citep{unifying-bellemare-16,hashexp-tang-16}, we show in our experiments that such generative methods struggle with more complex and naturalistic observations, such as the ego-centric image observations in the vizDoom benchmark.

How can we estimate the novelty of visited states, and thereby provide an intrinsic motivation signal for reinforcement learning, without explicitly building generative or predictive models of the state or observation? The key idea in our \algname \ algorithm is to estimate novelty by considering how easy it is for a discriminatively trained classifier to distinguish a given state from other states seen previously. The intuition is that, if a state is easy to distinguish from other states, it is likely to be novel. To this end, we propose to train \emph{exemplar models} for each state that distinguish that state from all other observed states. We present two key technical contributions that make this into a practical exploration method. First, we describe how discriminatively trained exemplar models can be used for implicit density estimation, allowing us to unify this intuition with the theoretically rigorous framework of count-based exploration. Our experiments illustrate that, in simple domains, the implicitly estimated densities provide good estimates of the underlying state densities without any explicit generative training. Second, we show how to amortize the training of exemplar models to prevent the total number of classifiers from growing with the number of states, making the approach practical and scalable. Since our method does not require any explicit generative modeling, we can use it on a range of complex image-based tasks, including Atari games and the vizDoom benchmark, which has complex 3D visuals and extensive camera motion due to the egocentric viewpoint. Our results show that \algname \ matches the performance of generative novelty-based exploration methods on simpler tasks, such as continuous control benchmarks and Atari, and greatly exceeds their performance on the complex vizDoom domain, indicating the value of implicit density estimation over explicit generative modeling for intrinsic motivation.

\section{Related Work}

In finite MDPs, exploration algorithms such as $E^3$ ~\citep{norlpt-kearns-02} and R-max ~\citep{rmax-brafman-02} offer theoretical optimality guarantees. However, these methods typically require maintaining state-action visitation counts, which can make extending them to high dimensional and/or continuous states very challenging. Exploring in such state spaces has typically involved strategies such as introducing distance metrics over the state space ~\citep{cpace-pazis-13,explmetric-kakade-03}, and approximating the quantities used in classical exploration methods. Prior works have employed approximations for the state-visitation count ~\citep{hashexp-tang-16,unifying-bellemare-16,gradient-boosting-Abel-16}, information gain, or prediction error based on a learned dynamics model~\citep{vime-houthooft-16,exploration-stadie-15,surprise-achiam-16}. ~\citet{unifying-bellemare-16} show that count-based methods in some sense bound the bonuses produced by exploration incentives based on \textit{intrinsic motivation}, such as model uncertainty or information gain, making count-based or density-based bonuses an appealing and simple option.

Other methods avoid tackling the exploration problem directly and use randomness over model parameters to encourage novel behavior~\citep{cl-eets-11}. For example, bootstrapped DQN ~\citep{bdqn-osband-16} avoids the need to construct a generative model of the state by instead training multiple, randomized value functions and performs exploration by sampling a value function, and executing the greedy policy with respect to the value function. While such methods scale to complex state spaces as well as standard deep RL algorithms, they do not provide explicit novelty-seeking behavior, but rather a more structured random exploration behavior.

Another direction explored in prior work is to examine exploration in the context of hierarchical models. An agent that can take temporally extended actions represented as action primitives or skills can more easily explore the environment \citep{learningoptions}. Hierarchical reinforcement learning has traditionally tried to exploit temporal abstraction \citep{hrl_survey} and relied on semi-Markov decision processes. A few recent works in deep RL have used hierarchies to explore in sparse reward environments~\citep{stochastic-nn,modulatedloco}. However, learning a hierarchy is difficult and has generally required curriculum learning or manually designed subgoals \citep{KulkarniHierarchical}. In this work, we discuss a general exploration strategy that is independent of the design of the policy and applicable to any architecture, though our experiments focus specifically on deep reinforcement learning scenarios, including image-based navigation, where the state representation is not conducive to simple count-based metrics or generative models.

Concurrently with this work, \citet{pathakICMl17curiosity} proposed to use discriminatively trained exploration bonuses by learning state features which are trained to predict the action from state transition pairs. Then given a state and action, their model predicts the features of the next state and the bonus is calculated from the prediction error. In contrast to our method, this concurrent work does not attempt to provide a probabilistic model of novelty and does not perform any sort of implicit density estimation. Since their method learns an inverse dynamics model, it does not provide for any mechanism to handle novel events that do not correlate with the agent's actions, though it does succeed in avoiding the need for generative modeling.

\section{Preliminaries}
\label{sec:preliminaries}

In this paper, we consider a Markov decision process (MDP), defined by the tuple $(\mathcal{S}, \mathcal{A}, \mathcal{T}, R, \gamma, \rho_0)$. $\mathcal{S}, \mathcal{A}$ are the state and action spaces, respectively. The transition distribution $\mathcal{T}(s'|a,s)$, initial state distribution $\rho_0(s)$, and reward function $R(s,a)$ are unknown in the reinforcement learning (RL) setting and can only be queried through interaction with the MDP. The goal of reinforcement learning is to find the optimal policy $\pi^*$ that maximizes the expected sum of discounted rewards,
$
\pi^* = 
\textrm{arg max}_{\pi} {
	\,E_{\tau \sim \pi}[\sum_{t=0}^T \gamma^{t} R(s_t, a_t)]
    }\,,
$
where, $\tau$ denotes a trajectory $(s_0, a_0, ... s_T, a_T)$ and \mbox{$\pi(\tau) = \rho_0(s_0) \prod_{t=0}^T \pi(a_{t}|s_t)T(s_{t+1}|s_t, a_{t})$}. Our experiments evaluate episodic tasks with a policy gradient RL algorithm, though extensions to infinite horizon settings or other algorithms, such as Q-learning and actor-critic, are straightforward.

Count-based exploration algorithms maintain a state-action visitation count $N(s,a)$, and encourage the agent to visit rarely seen states, operating on the principle of optimism under uncertainty. This is typically achieved by adding a reward bonus for visiting rare states. For example, MBIE-EB~\citep{mbieeb-strehl-08} uses a bonus of $\beta/\sqrt{N(s,a)}$, where $\beta$ is a constant, and BEB~\citep{bayesexp-kolter-09} uses a $\beta/(N(s,a)+|\mathcal{S}|)$. In the finite state and action spaces, these methods are PAC-MDP (for MBIE-EB) or PAC-BAMDP (for BEB), roughly meaning that the agent acts suboptimally for only a polynomial number of steps. In domains where explicit counting is impractical, pseudo-counts can be used based on a density estimate $p(s,a)$, which typically is done using some sort of generatively trained density estimation model~\citep{unifying-bellemare-16}. We will describe how we can estimate densities using only discriminatively trained classifiers, followed by a discussion of how this implicit estimator can be incorporated into a pseudo-count novelty bonus method.

\section{Exemplar Models and Density Estimation}
\label{sec:exemplar_models}

We begin by describing our discriminative model used to predict novelty of states visited during training. We highlight a connection between this particular form of discriminative model and density estimation, and in Section ~\ref{sec:expl_algo} describe how to use this model to generate reward bonuses.

\subsection{Exemplar Models}

To avoid the need for explicit generative models, our novelty estimation method uses \textit{exemplar models}. Given a dataset $X = \{x_1, ... x_n\}$, an exemplar model consists of a set of $n$ classifiers or discriminators $\{D_{x_1}, .... D_{x_n}\}$, one for each data point. Each individual discriminator $D_{x_i}$ is trained to distinguish a single positive data point $x_i$, the ``exemplar,'' from the other points in the dataset $X$. We borrow the term ``exemplar model'' from~\citet{exemplarsvm}, which coined the term ``exemplar SVM'' to refer to a particular linear model trained to classify each instance against all others. However, to our knowledge, our work is the first to apply this idea to exploration for reinforcement learning. In practice, we avoid the need to train $n$ distinct classifiers by amortizing through a single exemplar-conditioned network, as discussed in Section~\ref{sec:model_arch}.

Let $P_\mathcal{X}(x)$ denote the data distribution over $\mathcal{X}$, and let $D_{x^*}(x) : \mathcal{X} \to [0, 1]$ denote the discriminator associated with exemplar $x^*$. In order to obtain correct density estimates, as discussed in the next section, we present each discriminator with a balanced dataset, where half of the data consists of the exemplar $x^*$ and half comes from the background distribution $P_\mathcal{X}(x)$. Each discriminator is then trained to model a Bernoulli distribution $D_{x^*}(x) = P(x = x^* | x)$ via maximum likelihood. Note that the label $x=x^*$ is noisy because data that is extremely similar or identical to $x^*$ may also occur in the background distribution $P_\mathcal{X}(x)$, so the classifier does not always output 1. To obtain the maximum likelihood solution, the discriminator is trained to optimize the following cross-entropy objective
\begin{equation}
\label{eq:centloss}
D_{x^*} = \underset{D \in \mathcal{D}}{\textrm{arg max}}\,\left(E_{\delta_{x^*}}[\log D(x)] +
E_{P_\mathcal{X}}[\log{1-D(x)}]\right)\,.
\end{equation}
We discuss practical amortized methods that avoid the need to train $n$ discriminators in Section~\ref{sec:model_arch}, but to keep the derivation in this section simple, we consider independent discriminators for now.

\subsection{Exemplar Models as Implicit Density Estimation}
\label{section:density}

To show how the exemplar model can be used for implicit density estimation, we begin by considering an infinitely powerful, optimal discriminator, for which we can make an explicit connection between the discriminator and the underlying data distribution $P_\mathcal{X}(x)$:

\begin{prop} (Optimal Discriminator)
\label{prop:optimal_discrete}
For a discrete distribution $P_\mathcal{X}(x)$, the optimal discriminator $D_{x^*}$ for exemplar $x^*$ satisfies
\[
D_{x^*}(x) = \frac{\delta_{x^*}(x)}{\delta_{x^*}(x)+P_\mathcal{X}(x)} \hspace{0.5in} \text{and} \hspace{0.5in} D_{x^*}(x^*) = \frac{1}{1+P_\mathcal{X}(x^*)}.
\]
\end{prop}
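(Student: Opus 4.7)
The plan is to mirror the classical pointwise optimization argument used for the optimal discriminator in generative adversarial networks, adapted to the setting where the ``positive'' distribution is the point mass $\delta_{x^*}$ and the ``negative'' distribution is $P_\mathcal{X}$. Since the proposition posits an infinitely powerful discriminator (i.e., the class $\mathcal{D}$ contains every function $\mathcal{X} \to [0,1]$), we are free to optimize the value of $D(x)$ independently at each $x$, which reduces the problem to a one-dimensional calculus exercise.

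Concretely, I would first rewrite the cross-entropy objective in \eqref{eq:centloss} as a single sum over the discrete state space,
\[
J(D) = \sum_{x \in \mathcal{X}} \Big( \delta_{x^*}(x) \log D(x) + P_\mathcal{X}(x) \log(1 - D(x)) \Big),
\]
and observe that, because the constraint $D(x)\in[0,1]$ is separable across $x$, a maximizer of $J$ is obtained by choosing $D(x)$ to be, for each $x$ independently, the value $y \in [0,1]$ that maximizes
\[
f_{a,b}(y) = a \log y + b \log(1-y), \qquad a = \delta_{x^*}(x),\ b = P_\mathcal{X}(x).
\]

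Next I would perform the single-variable optimization: differentiating gives $f'_{a,b}(y) = a/y - b/(1-y)$, whose unique zero in $(0,1)$ is $y^\star = a/(a+b)$, and $f''_{a,b}$ is negative on $(0,1)$ whenever $a+b>0$, so $y^\star$ is the unique maximizer. Substituting back the expressions for $a$ and $b$ yields the claimed formula
\[
D_{x^*}(x) = \frac{\delta_{x^*}(x)}{\delta_{x^*}(x) + P_\mathcal{X}(x)}.
\]
Evaluating at $x = x^*$, where $\delta_{x^*}(x^*) = 1$, immediately gives the second identity $D_{x^*}(x^*) = 1/(1 + P_\mathcal{X}(x^*))$.

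There is essentially no hard step; the only thing worth handling carefully is the corner case where $\delta_{x^*}(x) + P_\mathcal{X}(x) = 0$, i.e., states $x \neq x^*$ that have zero mass under $P_\mathcal{X}$. At such points $f_{a,b} \equiv 0$ for every $y \in (0,1)$, so $D(x)$ can be set arbitrarily without affecting the objective, and the stated formula $0/0$ is simply undefined but irrelevant to the density-estimation argument that follows in Section \ref{section:density}. I would mention this briefly for completeness but not dwell on it.
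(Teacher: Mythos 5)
Your proposal is correct and follows exactly the route the paper sketches: differentiate the cross-entropy objective pointwise in $D(x)$, set to zero, and solve, which the separability of the objective over a discrete $\mathcal{X}$ justifies. The paper's proof is a one-line version of the same argument; your added checks (concavity and the $\delta_{x^*}(x)+P_\mathcal{X}(x)=0$ corner case) are sensible completeness details rather than a different approach.
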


\begin{proof}
The proof is obtained by taking the derivative of the loss in Eq.~(\ref{eq:centloss}) with respect to $D(x)$, setting it to zero, and solving for $D(x)$.
\end{proof}

It follows that, if the discriminator is optimal, we can recover the probability of a data point $P_\mathcal{X}(x^*)$ by evaluating the discriminator at its own exemplar $x^*$, according to 
\begin{equation}
\label{eq:density_estimate}
P_\mathcal{X}(x^*)=\frac{1-D_{x^*}(x^*)}{D_{x^*}(x^*)}.
\end{equation}

For continuous domains, $\delta_{x^*}(x^*) \to \infty $, so $D(x) \to 1$.
This means we are unable to recover $P_\mathcal{X}(x)$ via Eq.~(\ref{eq:density_estimate}). However, we can smooth the delta by adding noise $\epsilon \sim q(\epsilon)$ to the exemplar $x^*$ during training, which allows us to recover exact density estimates by solving for $P_\mathcal{X}(x)$. For example, if we let $q = \mathcal{N}(0, \sigma^2I)$, then the optimal discriminator evaluated at $x^*$ satisfies $D_{x^*}(x^*) = \left[1/\sqrt{2\pi\sigma^2}^d\right]/\left[1/\sqrt{2\pi\sigma^2}^d+P_\mathcal{X}(x)\right]$. Even if we do not know the noise variance, we have
\begin{equation}
\label{eq:density_estimate_smooth}
P_\mathcal{X}(x^*) \propto \frac{1-D_{x^*}(x^*)}{D_{x^*}(x^*)}.
\end{equation}
This proportionality holds for any noise $q$ as long as $(\delta_{x^*}\ast q)(x^*)$ (where $\ast$ denotes convolution) is the same for every $x^*$. The reward bonus we describe in Section~\ref{sec:expl_algo} is invariant to the normalization factor, so proportional estimates are sufficient.

In practice, we can get density estimates that are better suited for exploration by introducing smoothing, which involves adding noise to the background distribution $P_\mathcal{X}$, to produce the estimator
\[D_{x^*}(x) = \frac{(\delta_{x^*}\ast q)(x)}{(\delta_{x^*}\ast q)(x)+(P_\mathcal{X}\ast q)(x^*)}.\]

We then recover our density estimate as $(P_\mathcal{X}\ast q)(x^*)$. In the case when $P_\mathcal{X}$ is a collection of delta functions around data points, this is equivalent to kernel density estimation using the noise distribution as a kernel. With Gaussian noise $q = \mathcal{N}(0, \sigma^2I)$, this is equivalent to using an RBF kernel.

\subsection{Latent Space Smoothing with Noisy Discriminators}
\label{sec:var_discrim}

In the previous section, we discussed how adding noise can provide for smoothed density estimates, which is especially important in complex or continuous spaces, where all states might be distinguishable with a powerful enough discriminator. Unfortunately, for high-dimensional states, such as images, adding noise directly to the state often does not produce meaningful new states, since the distribution of states lies on a thin manifold, and any added noise will lift the noisy state off of this manifold. In this section, we discuss how we can learn a smoothing distribution by injecting the noise into a learned latent space, rather than adding it to the original states.

Formally, we introduce a latent variable $z$. We wish to train an encoder distribution $q(z|x)$, and a latent space classifier $p(y|z) = D(z)^y(1-D(z))^{1-y}$, where $y=1$ when $x=x^*$ and $y=0$ when $x \neq x^*$. We additionally regularize the noise distribution against a prior distribution $p(z)$, which in our case is a unit Gaussian. Letting $\widetilde{p}(x) = \frac{1}{2}\delEx(x) + \frac{1}{2}\pX(x)$ denote the balanced training distribution from before, we can learn the latent space by maximizing the objective
\begin{equation}
\label{eq:objective_latent}
\max_{p_{y|z}, q_{z|x}} E_{\widetilde{p}}[E_{q_{z|x}}[\log p(y|z)] - \KL{q(z|x)}{p(z)}]\,.
\end{equation}
Intuitively, this objective optimizes the noise distribution so as to maximize classification accuracy while transmitting as little information through the latent space as possible. This causes $z$ to only capture the factors of variation in $x$ that are most informative for distinguish points from the exemplar, resulting in noise that stays on the state manifold. For example, in the Atari domain, latent space noise might correspond to smoothing over the location of the player and moving objects on the screen, in contrast to performing pixel-wise Gaussian smoothing.

Letting $q(z|y=1) = \int_x \delEx(x)q(z|x)dx$ and $q(z|y=0) = \int_x \pX(x)q(z|x)dx$ denote the marginalized positive and negative densities over the latent space, we can characterize the optimal discriminator and encoder distributions as follows. For any encoder $q(z|x)$, the optimal discriminator $D(z)$ satisfies:
\[
p(y=1|z) = D(z) = \frac{q(z|y=1)}{q(z|y=1)+q(z|y=0)}\,,
\]
and for any discriminator $D(z)$, the optimal encoder distribution satisfies:
\[
q(z|x) \propto D(z)^{y_{\text{soft}}(x)}(1-D(z))^{1-y_{\text{soft}}(x)}p(z)\,,
\]
where $y_{\text{soft}}(x) = p(y=1|x) = \frac{\delEx(x)}{\delEx(x)+\pX(x)}$ is the average label of $x$. These can be obtained by differentiating the objective, and the full derivation is included in Appendix~\ref{sec:app_discrim}. Intuitively, $q(z|x)$ is equal to the prior $p(z)$ by default, which carries no information about $x$. It then scales up the probability on latent codes $z$ where the discriminator is confident and correct. To recover a density estimate, we estimate 
$D(x) = E_{q}[D(z)]$ and apply Eq.~(\ref{eq:density_estimate_smooth}) to obtain the density.

\subsection{Smoothing from Suboptimal Discriminators}
In our previous derivations, we assume an optimal, infinitely powerful discriminator which can emit a different value $D(x)$ for every input $x$. However, this is typically not possible except for small, countable domains. A secondary but important source of density smoothing occurs when the discriminator has difficulty distinguishing two states $x$ and $x'$. In this case, the discriminator will average over the outputs of the infinitely powerful discriminator. This form of smoothing comes from the inductive bias of the discriminator, which is difficult to quantify. In practice, we typically found this effect to be beneficial for our model rather than harmful. An example of such smoothed density estimates is shown in Figure~\ref{fig:density_2d}. Due to this effect, adding noise is not strictly necessary to benefit from smoothing, though it provides for significantly better control over the degree of smoothing.

\section{$\textrm{EX}^2$: Exploration with Exemplar Models}
\label{sec:expl_algo}

We can now describe our exploration algorithm based on implicit density models. Pseudocode for a batch policy search variant using the single exemplar model is shown in Algorithm~\ref{alg:exploration}. Online variants for other RL algorithms, such as Q-learning, are also possible. In order to apply the ideas from count-based exploration described in Section ~\ref{sec:preliminaries}, we must approximate the state visitation counts $N(s) = nP(s)$, where $P(s)$ is the distribution over states visited during training. Note that we can easily use state-action counts $N(s,a)$, but we omit the action for simplicity of notation. To generate approximate samples from $P(s)$, we use a replay buffer $B$, which is a first-in first-out (FIFO) queue that holds previously visited states. Our exemplars are the states we wish to score, which are the states in the current batch of trajectories. In an online algorithm, we would instead train a discriminator after receiving every new observation one at a time, and compute the bonus in the same manner.

Given the output from discriminators trained to optimize Eq~(\ref{eq:centloss}), we augment the reward with a function of the ``novelty'' of the state (where $\beta$ is a hyperparameter that can be tuned to the magnitude of the task reward): $R'(s, a) = R(s, a) + \beta f(D_{s}(s)).$

In our experiments, we use the heuristic bonus $-\log{p(s)}$, due to the fact that normalization constants become absorbed by baselines used in typical RL algorithms. For discrete domains, we can also use a count-based $1/\sqrt{N(s)}$~\citep{hashexp-tang-16}, where $N(s) = nP(s)$, and $n$ being the size of the replay buffer $B$. A summary of \algname \ for a generic batch reinforcement learner is shown in Algorithm~\ref{alg:exploration}.

\begin{algorithm*}[tb]
\caption{\algname \ for batch policy optimization}
\label{alg:exploration}
\begin{algorithmic}[1]
    \STATE Initialize replay buffer $B$
    \FOR{iteration $i$ in \{1, \dots, N\}}
        \STATE Sample trajectories $\{\tau_j\}$ from policy $\pi_i$
        \FOR{state $s$ in $\{\tau\}$}
        	\STATE Sample a batch of negatives $\{s'_k\}$ from $B$.
        	\STATE Train discriminator $D_s$ to minimize Eq.~(\ref{eq:centloss}) with positive $s$, and negatives $\{s'_k\}$.
             \STATE Compute reward $R'(s,a) = R(s,a)+\beta f(D_s(s))$
        \ENDFOR
        \STATE Improve $\pi_i$ with respect to $R'(s,a)$ using any policy optimization method.
        \STATE $B \leftarrow B \cup \{\tau_i\}$
    \ENDFOR
\end{algorithmic}
\end{algorithm*}

\section{Model Architecture}
\label{sec:model_arch}

To process complex observations such as images, we implement our exemplar model using neural networks, with convolutional models used for image-based domains. To reduce the computational cost of training such large per-exemplar classifiers, we explore two methods for amortizing the computation across multiple exemplars.

\subsection{Amortized Multi-Exemplar Model}

Instead of training a separate classifier for each exemplar, we can instead train a single model that is conditioned on the exemplar $x^*$. When using the latent space formulation, we condition the latent space discriminator $p(y|z)$ on an encoded version of $x^*$ given by $q(z^*|x^*)$, resulting in a classifier for the form $p(y|z, z^*) = D(z, z^*)^y(1-D(z, z^*))^{1-y}$. The advantage of this amortized model is that it does not require us to train new discriminators from scratch at each iteration, and provides some degree of generalization for density estimation at new states. A diagram of this architecture is shown in Figure~\ref{fig:net_arch}. The amortized architecture has the appearance of a comparison operator: it is trained to output 0 when $x^* \neq x$, and the optimal discriminator values covered in Section~\ref{sec:exemplar_models} when $x^* = x$, subject to the smoothing imposed by the latent space noise.

\subsection{K-Exemplar Model}

As long as the distribution of positive examples is known, we can recover density estimates via Eq.~(\ref{eq:density_estimate_smooth}). Thus, we can also consider a batch of exemplars ${x_1, ..., x_K}$, and sample from this batch uniformly during training. We refer to this model as the "K-Exemplar" model, which allows us to interpolate smoothly between a more powerful model with one discriminator per state ($K=1$) with a weaker model that uses a single discriminator for all states ($K= \textrm{\# states}$). A more detailed discussion of this method is included in Appendix~\ref{sec:k_exemplar}. In our experiments, we batch adjacent states in a trajectory into the same discriminator which corresponds to a form of temporal regularization that assumes that adjacent states in time are similar. We also share the majority of layers between discriminators in the neural networks similar to \citep{bdqn-osband-16}, and only allow the final linear layer to vary amongst discriminators, which forces the shared layers to learn a joint feature representation, similarly to the amortized model. An example architecture is shown in Figure~\ref{fig:net_arch}.

\begin{figure*}[t]
\vspace*{-3.5cm}
\setlength{\unitlength}{0.99\columnwidth}
\begin{picture}(0.48,0.5) \linethickness{0.5pt}
    \put(0.0,0.0){a) Amortized Architecture}
    \put(0.53,0.0){b) K-Exemplar Architecture}
    \put(0.0,0.03){\includegraphics[width=0.4\columnwidth]{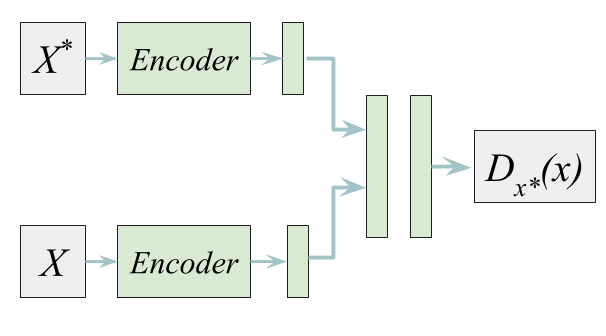}} 
    \put(0.53,0.03){\includegraphics[width=0.37\columnwidth]{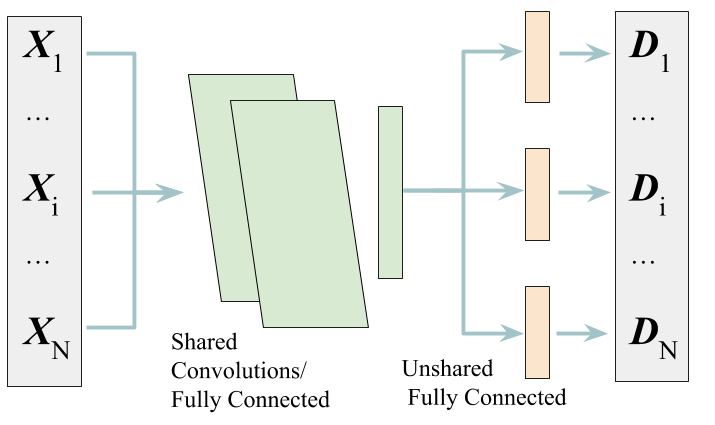}} 
\end{picture}
\caption{
\label{fig:net_arch}
A diagram of our a) amortized model architecture and b) the K-exemplar model architecture. Noise is injected after the encoder module (a) or after the shared layers (b). Although possible, we do not tie the encoders of (a) in our experiments.}
\end{figure*}

\subsection{Relationship to Generative Adverserial Networks (GANs)}
\label{sec:gan_relationship}

Our exploration algorithm has an interesting interpretation related to GANs~\citep{gan-goodfellow}. The policy can be viewed as the generator of a GAN, and the exemplar model serves as the discriminator, which is trying to classify states from the current batch of trajectories against previous states. Using the K-exemplar version of our algorithm, we can train a single discriminator for all states in the current batch (rather than one for each state), which mirrors the GAN setup.

In GANs, the generator plays an adverserial game with the discriminator by attempting to produce indistinguishable samples in order to fool the discriminator. However, in our algorithm, the generator is rewarded for helping the discriminator rather than fooling it, so our algorithm plays a cooperative game instead of an adverserial one. Instead, they are competing with the progression of time: as a novel state becomes visited frequently, the replay buffer will become saturated with that state and it will lose its novelty. This property is desirable in that it forces the policy to continually seek new states from which to receive exploration bonuses.

\section{Experimental Evaluation}
\label{sec:experiments}

The goal of our experimental evaluation is to compare the \algname \ method to both a na\"{i}ve exploration strategy and to recently proposed exploration schemes for deep reinforcement learning based on explicit density modeling. We present results on both low-dimensional benchmark tasks used in prior work, and on more complex vision-based tasks, where prior density-based exploration bonus methods are difficult to apply. We use TRPO~\citep{trpo-schulman-16} for policy optimization, because it operates on both continuous and discrete action spaces, and due to its relative robustness to hyper-parameter choices~\citep{benchmarking-rocky}. Our code and additional supplementary material including videos will be available at \url{https://sites.google.com/view/ex2exploration}.

\paragraph{Experimental Tasks}

Our experiments include three low-dimensional tasks intended to assess whether \algname \ can successfully perform implicit density estimation and computer exploration bonuses, and four high-dimensional image-based tasks of varying difficulty intended to evaluate whether implicit density estimation provides improvement in domains where generative modeling is difficult. The first low-dimensional task is a continuous 2D maze with a sparse reward function that only provides a reward when the agent is within a small radius of the goal. Because this task is 2D, we can use it to directly visualize the state visitation densities and compare to an upper bound histogram method for density estimation. The other two low-dimensional tasks are benchmark tasks from the OpenAI gym benchmark suite, SparseHalfCheetah and SwimmerGather, which provide for a comparison against prior work on generative exploration bonuses in the presence of sparse rewards.

For the vision-based tasks, we include three Atari games, as well as a much more difficult ego-centric navigation task based on vizDoom (DoomMyWayHome+). The Atari games are included for easy comparison with prior methods based on generative models, but do not provide especially challenging visual observations, since the clean 2D visuals and relatively low visual diversity of these tasks makes generative modeling easy. In fact, prior work on video prediction for Atari games easily achieves accurate predictions hundreds of frames into the future~\citep{acvp-oh-15}, while video prediction on natural images is challenging even a couple of frames into the future~\citep{DBLP:journals/corr/MathieuCL15}. The vizDoom maze navigation task is intended to provide a comparison against prior methods with substantially more challenging observations: the game features a first-person viewpoint, 3D visuals, and partial observability, as well as the usual challenges associated with sparse rewards. We make the task particularly difficult by initializing the agent in the furthest room from the goal location, requiring it to navigate through 8 rooms before reaching the goal. Sample images taken from several of these tasks are shown in Figure~\ref{fig:tasks_mini} and detailed task descriptions are given in Appendix~\ref{sec:task_descriptions}.

We compare the two variants of our method (K-exemplar and amortized) to standard random exploration, kernel density estimation (KDE) with RBF kernels, a method based on Bayesian neural network generative models called VIME~\citep{vime-houthooft-16}, and exploration bonuses based on hashing of latent spaces learned via an autoencoder~\citep{hashexp-tang-16}.

\paragraph{2D Maze}

On the 2D maze task, we can visually compare the estimated state density from our exemplar model and the empirical state-visitation distribution sampled from the replay buffer, as shown in Figure~\ref{fig:density_2d}. Our model generates sensible density estimates that smooth out the true empirical distribution. For exploration performance, shown in Table~\ref{tbl:results},TRPO with Gaussian exploration cannot find the sparse reward goal, while both variants of our method perform similarly to VIME and KDE. Since the dimensionality of the task is low, we also use a histogram-based method to estimate the density, which provides an upper bound on the performance of count-based exploration on this task.

\begin{figure*}[t]
\setlength{\unitlength}{0.99\columnwidth}
    \centering
    \begin{minipage}{.65\textwidth}
    \begin{picture}(0.49,0.252) \linethickness{0.5pt}
    \put(0.00,0.0){a) Exemplar}
    \put(0.22,0.0){b) Empirical}
    \put(0.435,0.0){c) Varying Smoothing}
    \put(0.00,0.03){\includegraphics[width=0.66\columnwidth]{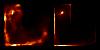}} 
    \put(0.44,0.03){\includegraphics[width=0.333\columnwidth]{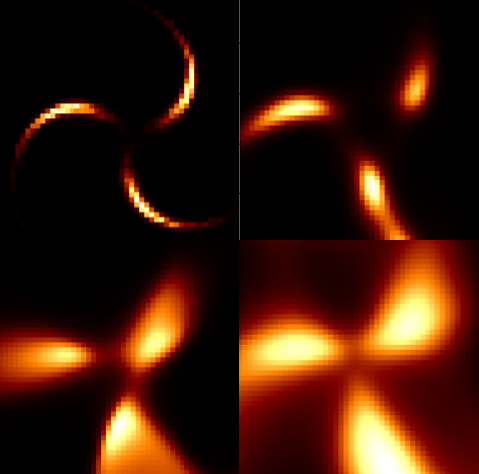}} 
    \end{picture}
\caption{
\label{fig:density_2d}
\textbf{a, b}) Illustration of estimated densities on the 2D maze task produced by our model (a), compared to the empirical discretized distribution (b). Our method provides reasonable, somewhat smoothed density estimates. \textbf{c}) Density estimates produced with our implicit density estimator on a toy dataset (top left), with increasing amounts of noise regularization. 
}
    \end{minipage}%
    \hfill
    \begin{minipage}{0.32\textwidth}
        \centering
        \begin{picture}(0.33,0.295) \linethickness{0.5pt}
    \put(-0.015,0.147){\includegraphics[width=0.58\columnwidth]{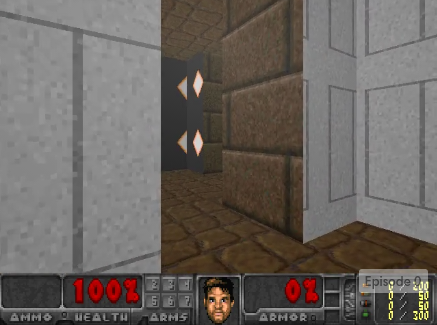}} 
    \put(0.175,0.147){\includegraphics[trim={0 0.0cm 0 4.0cm},clip,width=0.45\columnwidth]{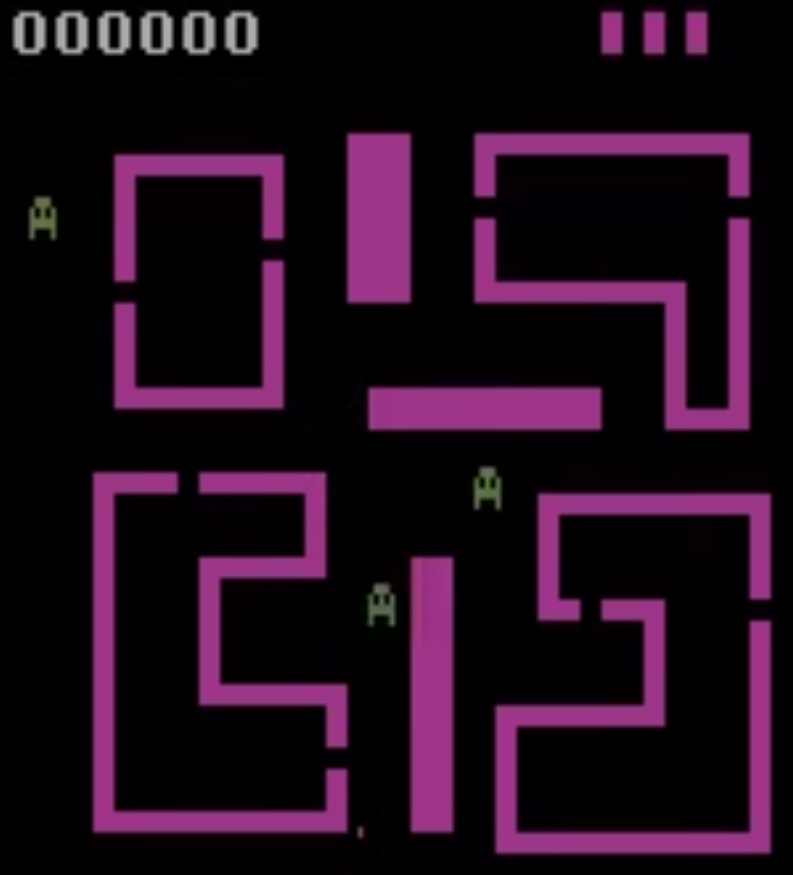}}
    \put(-0.015,0.01){\includegraphics[trim={0 3.6cm 0 0.0},clip,width=0.58\columnwidth]{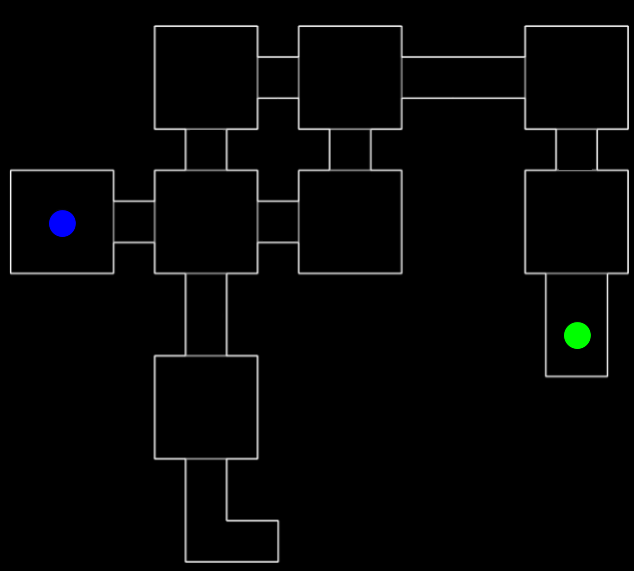}}
    \put(0.175,0.01){\includegraphics[width=0.45\columnwidth]{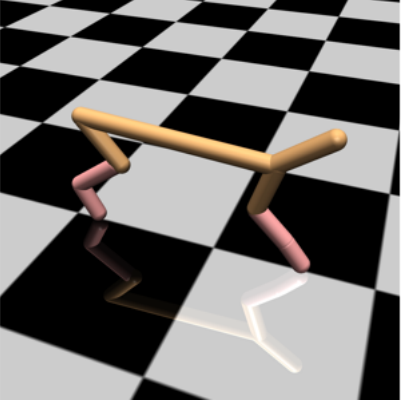}}

	 \end{picture}
     \vspace{-0.2in}
\caption{
\label{fig:tasks_mini}
Example task images. From top to bottom, left to right: Doom, map of the MyWayHome task (goal is green, start is blue), Venture, HalfCheetah.
}
    \end{minipage}

\end{figure*}

\paragraph{Continuous Control: SwimmerGather and SparseHalfCheetah}

SwimmerGather and SparseHalfCheetah are two challenging continuous control tasks proposed by~\citet{vime-houthooft-16}. Both environments feature sparse reward and medium-dimensional observations (33 and 20 dimensions respectively). SwimmerGather is a hierarchical task in which no previous algorithms using na\"ive exploration have made any progress. Our results demonstrate that, even on medium-dimensional tasks where explicit generative models should perform well, our implicit density estimation approach achieves competitive results. \algname, VIME, and Hashing significantly outperform the na\"{i}ve TRPO algorithm and KDE on SwimmerGather, and amortized \algname outperforms all other methods on SparseHalfCheetah by a significant margin. This indicates that the implicit density estimates obtained by our method provide for exploration bonuses that are competitive with a variety of explicit density estimation techniques.

\begin{table*}[tb]
\begin{center}
\footnotesize{
\begin{threeparttable}
\begin{tabular}{l|cc|cccc|c}
\hline
Task & K-Ex.(ours) & Amor.(ours) & VIME\tnote{1} & TRPO\tnote{2} & Hashing\tnote{3} & KDE & Histogram  \\
\hline
2D Maze & -104.2 & -132.2 & -135.5 & -175.6 & - & -117.5 & -69.6\\
SparseHalfCheetah & 3.56 & 173.2 & 98.0 & 0&  0.5 & 0 & -\\
SwimmerGather & 0.228 & 0.240 & 0.196 &  0 &  0.258  & 0.098 & -\\
Freeway (Atari) & - & 33.3 & - & 16.5 &  {33.5} & - & -\\
Frostbite (Atari) & - & 4901 & - & 2869 &  {5214} & - & -\\
Venture (Atari) & - & 900 & - & 121 &  445 & - & -\\
DoomMyWayHome & 0.740 & 0.788  & 0.443 & 0.250 & 0.331 & 0.195 & -\\
\hline
\end{tabular}
\begin{tablenotes}[para]
	\item[1] \cite{vime-houthooft-16} \item[2] \cite{trpo-schulman-16} \item[3] \cite{hashexp-tang-16}
\end{tablenotes}
\end{threeparttable}
}
\end{center}
\caption{Mean scores (higher is better) of our algorithm (both K-exemplar and amortized) versus VIME \citep{vime-houthooft-16}, baseline TRPO, Hashing, and kernel density estimation (KDE). Our approach generally matches the performance of previous explicit density estimation methods, and greatly exceeds their performance on the challenging DoomMyWayHome+ task, which features camera motion, partial observability, and extremely sparse rewards. We did not run VIME or K-Exemplar on Atari games due to computational cost. Atari games are trained for 50 M time steps. Learning curves are included in Appendix~\ref{sec:learning_curves}}
\label{tbl:results}
\end{table*}

\paragraph{Image-Based Control: Atari and Doom}

In our final set of experiments, we test the ability of our algorithm to scale to rich sensory inputs and high dimensional image-based state spaces. We chose several Atari games that have sparse rewards and present an exploration challenge, as well as a maze navigation benchmark based on vizDoom. Each domain presents a unique set of challenges. The vizDoom domain contains the most realistic images, and the environment is viewed from an egocentric perspective which makes building dynamics models difficult and increases the importance of intelligent smoothing and generalization. The Atari games (Freeway, Frostbite, Venture) contain simpler images from a third-person viewpoint, but often contain many moving, distractor objects that a density model must generalize to. Freeway and Venture contain sparse reward, and Frostbite contains a small amount of dense reward but attaining higher scores typically requires exploration.

Our results demonstrate that \algname \  is able to generate coherent exploration behavior even high-dimensional visual environments, matching the best-performing prior methods on the Atari games. On the most challenging task, DoomMyWayHome+, our method greatly exceeds all of the prior exploration techniques, and is able to guide the agent through multiple rooms to the goal. This result indicates the benefit of implicit density estimation: while explicit density estimators can achieve good results on simple, clean images in the Atari games, they begin to struggle with the more complex egocentric observations in vizDoom, while our \algname \ is able to provide reasonable density estimates and achieves good results.

\section{Conclusion and Future Work}
\label{sec:conclusion}

We presented \algname, a scalable exploration strategy based on training discriminative exemplar models to assign novelty bonuses. We also demonstrate a novel connection between exemplar models and density estimation, which motivates our algorithm as approximating pseudo-count exploration. This density estimation technique also does not require reconstructing samples to train, unlike most methods for training generative or energy-based models. Our empirical results show that \algname \ tends to achieve comparable results to the previous state-of-the-art for continuous control tasks on low-dimensional environments, and can scale gracefully to handle rich sensory inputs such as images. Since our method avoids the need for generative modeling of complex image-based observations, it exceeds the performance of prior generative methods on domains with more complex observation functions, such as the egocentric Doom navigation task.

To understand the tradeoffs between discriminatively trained exemplar models and generative modeling, it helps to consider the behavior of the two methods when overfitting or underfitting. Both methods will assign flat bonuses when underfitting and high bonuses to all new states when overfitting. However, in the case of exemplar models, overfitting is easy with high dimensional observations, especially in the amortized model where the network simply acts as a comparator. Underfitting is also easy to achieve, simply by increasing the magnitude of the noise injected into the latent space. Therefore, although both approach can suffer from overfitting and underfitting, the exemplar method provides a single hyperparameter that interpolates between these extremes without changing the model. An exciting avenue for future work would be to adjust this smoothing factor automatically, based on the amount of available data. More generally, implicit density estimation with exemplar models is likely to be of use in other density estimation applications, and exploring such applications would another exciting direction for future work.

\paragraph{Acknowledgement}
We would like to thank Adam Stooke, Sandy Huang, and Haoran Tang for providing efficient and parallelizable policy search code. We thank Joshua Achiam for help with setting up benchmark tasks. This research was supported by NSF IIS-1614653, NSF IIS-1700696, an ONR Young Investigator Program award, and Berkeley DeepDrive.

\FloatBarrier


\bibliography{example_paper}
\bibliographystyle{icml2017}

\clearpage
\appendix 

\section{Appendix}

\subsection{Noisy Discriminators}
\label{sec:app_discrim}

Our noisy latent space discriminator of Section~\ref{sec:var_discrim} optimizes the objective:
\begin{equation}
\label{eq:objective_latent2}
\max_{p_{y|z}, q_{z|x}} E_{\widetilde{p}}[E_{q_{z|x}}[\log p(y|z)] - \KL{q(z|x)}{p(z)}]
\end{equation}

Where $\widetilde{p}(x)$ is a balanced dataset of positives $x \sim \delEx(x)$ with label $y=1$, and negatives $x \sim \pX(x)$ with label $y=0$.

\begin{prop} (Noisy Optimal Discriminator)
\label{prop:noisy_opt_discrim}
For any encoder distribution $q(z|x)$, the optimal noisy discriminator of Section~\ref{sec:var_discrim} satisfies
\[D(z) = \frac{q(z|y=1)}{q(z|y=1)+q(z|y=0)}.\]
\end{prop}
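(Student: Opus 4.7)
The plan is to carry out a standard pointwise functional optimization, exploiting the fact that the KL term does not involve $p(y|z)$ and so drops out when we fix the encoder $q(z|x)$. So I would begin by isolating the part of the objective in Eq.~(\ref{eq:objective_latent2}) that depends on $D(z)$: only the term $E_{\widetilde{p}}[E_{q(z|x)}[\log p(y|z)]]$ matters.

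Next, I would expand the expectation over the balanced training distribution $\widetilde{p}$ into its two halves, writing the classifier log-likelihood as $\log p(y|z) = y\log D(z) + (1-y)\log(1-D(z))$. Substituting $y=1$ with $x \sim \delEx$ and $y=0$ with $x \sim \pX$ gives
\[
\tfrac{1}{2}E_{\delEx}[E_{q(z|x)}[\log D(z)]] + \tfrac{1}{2}E_{\pX}[E_{q(z|x)}[\log(1-D(z))]].
\]
Swapping the order of integration (Fubini) and using the definitions $q(z|y=1) = \int \delEx(x)q(z|x)dx$ and $q(z|y=0) = \int \pX(x)q(z|x)dx$ collapses this to
\[
\tfrac{1}{2}\int q(z|y=1)\log D(z)\,dz + \tfrac{1}{2}\int q(z|y=0)\log(1-D(z))\,dz.
\]

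At this point the problem decouples across $z$: for each $z$ independently we maximize $a\log D + b\log(1-D)$ in $D \in [0,1]$ where $a = q(z|y=1)$ and $b = q(z|y=0)$. Taking the derivative and setting it to zero yields $a(1-D) = bD$, so $D(z) = a/(a+b)$, which is exactly the claimed formula. The last step I would verify is that the second derivative is negative (so we really have a maximum) and that the exchange of differentiation and integration is legitimate; neither is a genuine obstacle since the integrand is concave in $D(z)$ pointwise.

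The argument is essentially the same as the optimal-discriminator derivation for a GAN, the only real twist being the marginalization over $x$ that defines $q(z|y)$. I do not foresee a hard step: the main thing to get right is bookkeeping of the factor of $\tfrac{1}{2}$ (which cancels) and the clean application of Fubini to rewrite the double expectation as a single integral over $z$ weighted by the class-conditional latent marginals.
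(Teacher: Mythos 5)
Your proposal is correct and follows essentially the same route as the paper's proof: both drop the KL term (which does not depend on $D$), split the balanced expectation into the $\delEx$ and $\pX$ halves to identify $q(z|y=1)$ and $q(z|y=0)$, and solve the first-order condition $a(1-D)=bD$ pointwise in $z$. Your version adds the explicit Fubini step and the concavity check, but the substance is identical.
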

\begin{proof}
This is readily obtained by differentiating the objective with respect to $D(z)$. First we rewrite Eq.~(\ref{eq:objective_latent2}) in terms of $D(z)$:
\[\mathcal{L} = E_{x, y \sim \widetilde{p}}[ \int_z q(z|x)(y\log D(z) + (1-y)\log (1-D(z)) ] - \KL{q(z|x)}{p(z)}\]
Differentiating and setting to 0, we obtain:
\[
\frac{\partial \mathcal{L}}{\partial D(z)}
= \int_{x,y} \widetilde{p}(x,y) q(z|x)(y\frac{1}{D(z)} - (1-y)\frac{1}{1-D(z)}) d\{x,y\} = 0
\]
Splitting up the positive $\widetilde{p}(x|y=1)$ and negative $\widetilde{p}(x|y=0)$ distributions, we have:
\[
\frac{1}{2}\frac{1}{D(z)}
\underbrace{\int_x \delEx(x)q(z|x) dx}_{q(z|y=1)}
- \frac{1}{2}\frac{1}{1-D(z)}
\underbrace{\int_x \pX(x) q(z|x) dx}_{q(z|y=0)}
= 0
\]

Solving for $D(z)$ yields the desired result.
\end{proof}

We can also write down the form of the optimal encoder to understand how the objective shapes the encoding distribution:

\begin{prop} (Noisy Optimal Encoder)
\label{prop:noisy_opt_discrim}
For any discriminator $D(z)$, the optimal encoder of Section~\ref{sec:var_discrim} satisfies
\[q(z|x) \propto D(z)^{y_{\text{soft}}(x)}(1-D(z))^{1-y_{\text{soft}}(x)}p(z).\]
\text{Where }$y_{\text{soft}}(x) = p(y=1|x) = \frac{\delEx(x)}{\delEx(x)+\pX(x)}$ \text{is the average label of} $x$.
\end{prop}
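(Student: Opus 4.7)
The plan is to fix the discriminator $D(z)$ and maximize the objective in Eq.~(\ref{eq:objective_latent2}) over the encoder $q(z|x)$ using a standard variational (Gibbs-distribution) argument. First, I would rewrite the expectation over the balanced training distribution $\widetilde{p}(x,y)$ by conditioning on $x$: the inner label expectation collapses to $y_{\text{soft}}(x) = p(y=1\mid x) = \delEx(x)/(\delEx(x)+\pX(x))$, which is exactly the posterior over $y$ implied by the balanced mixture $\widetilde{p}(x) = \tfrac{1}{2}\delEx + \tfrac{1}{2}\pX$. After this step, the objective in Eq.~(\ref{eq:objective_latent2}) decomposes as an integral over $x$ (weighted by $\widetilde{p}(x)$) of a per-$x$ functional of $q(\cdot \mid x)$.

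Second, for each fixed $x$ the per-$x$ functional can be written as
\begin{equation*}
\mathcal{F}_x[q] = \int q(z|x)\bigl[y_{\text{soft}}(x)\log D(z) + (1-y_{\text{soft}}(x))\log(1-D(z))\bigr]\,dz \;-\; \KL{q(z|x)}{p(z)}.
\end{equation*}
Defining $f(z;x) = y_{\text{soft}}(x)\log D(z) + (1-y_{\text{soft}}(x))\log(1-D(z)) + \log p(z)$, I would combine the two terms to obtain
\begin{equation*}
\mathcal{F}_x[q] = -\KL{q(z|x)}{\tfrac{1}{Z(x)}e^{f(z;x)}} + \log Z(x),
\end{equation*}
where $Z(x) = \int e^{f(z;x)}\,dz$ is the normalizing constant. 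Since $\log Z(x)$ does not depend on $q$, maximizing $\mathcal{F}_x$ is equivalent to minimizing a KL divergence, which is uniquely attained at $q(z|x) \propto e^{f(z;x)}$, giving the claimed form $q(z|x) \propto D(z)^{y_{\text{soft}}(x)}(1-D(z))^{1-y_{\text{soft}}(x)}p(z)$.

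Third, I would note that because the decomposition over $x$ is pointwise and $\widetilde{p}(x) \geq 0$, optimizing each $\mathcal{F}_x$ separately maximizes the full outer expectation, so the per-$x$ solution is globally optimal for the encoder. Alternatively, one can arrive at the same formula by differentiating under a Lagrange multiplier enforcing $\int q(z|x)\,dz = 1$, mirroring the calculus-of-variations style used in the proof of Proposition~\ref{prop:noisy_opt_discrim} above; the stationarity condition $\log q(z|x) = f(z;x) - 1 - \lambda(x)$ gives the same exponential family up to normalization.

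The main subtlety—and the only real obstacle—is handling the singular measure $\delEx$ when computing $y_{\text{soft}}(x)$. Since $\delEx(x) = \infty$ at $x = x^*$, one has $y_{\text{soft}}(x^*) = 1$ and $y_{\text{soft}}(x) = 0$ for $x \neq x^*$ in the discrete case; in the continuous case $y_{\text{soft}}$ should be interpreted as a Radon--Nikodym derivative of the positive component of $\widetilde{p}$ against the mixture, exactly as in Proposition~\ref{prop:optimal_discrete}. Once this interpretation is fixed (which the paper has already done in its earlier derivations), the Gibbs-form argument goes through without further difficulty.
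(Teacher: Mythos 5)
Your proof is correct, and the overall result matches the paper's, but your primary argument is a genuinely different (and somewhat stronger) route. You share the paper's first step of conditioning on $x$ so that the label expectation collapses to $y_{\text{soft}}(x)=p(y=1|x)$; from there the paper simply takes the functional derivative of the objective with respect to $q(z|x)$, sets it to zero, and reads off $q(z|x)\propto p(z)\,D(z)^{p(y=1|x)}(1-D(z))^{p(y=0|x)}$ from the stationarity condition (the normalization being absorbed into the proportionality). You instead complete the per-$x$ functional into $-\KL{q(z|x)}{\tfrac{1}{Z(x)}e^{f(z;x)}}+\log Z(x)$ and invoke the Gibbs variational principle, which buys you something the paper's calculation does not explicitly give: the maximizer is the \emph{unique global} optimum (KL is nonnegative and vanishes only at equality), not merely a stationary point, and you avoid any hand-waving about the omitted normalization constraint. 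Your closing alternative via a Lagrange multiplier, with stationarity condition $\log q(z|x)=f(z;x)-1-\lambda(x)$, is essentially the paper's computation verbatim. One peripheral quibble: in your discussion of the singular measure, the discrete case uses a Kronecker delta with $\delEx(x^*)=1$, so $y_{\text{soft}}(x^*)=1/(1+\pX(x^*))$ rather than $1$; only in the continuous (Dirac) case does $y_{\text{soft}}(x^*)\to 1$. This does not affect the validity of your main argument, which holds for whatever value $y_{\text{soft}}(x)$ takes.
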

\begin{proof}
This is readily obtained by differentiating the objective with respect to $q(z|x)$. Letting $\mathcal{L}$ denote the objective of Eq.~(\ref{eq:objective_latent2}):
\[
0 = \frac{\partial \mathcal{L}}{\partial q(z|x)} =\frac{\partial}{\partial q(z|x)} \int_{y,x} p(y|x)\ptilde(x) [\int_z q(z|x)\log p(y|z) dz - \int_z q(z|x)\log\frac{q(z|x)}{p(z)}dz] d\{x,y\}
\]

\[
0 = \int_y p(y|x)[\log p(y|z) - 1 - \log q(z|x) + \log p(z)] dy
\]
Rearranging,
\[
\log q(z|x) = 1 + \log p(z) + \int_y p(y|x)\log p(y|z) dy
\]
\[
q(z|x) \propto p(z) e^{\int_y p(y|x)\log p(y|z) dy}
= p(z) [D(z)^{p(y=1|x)} (1-D(z))^{p(y=0|x)}]
\]

\end{proof}

\subsection{K-Exemplar Model}
\label{sec:k_exemplar}

In the \textit{K-exemplar model}, each discriminator is associated with a batch of K positive exemplars $B = \{x_1, \dots x_K\}$. In this case, we sample positives from the batch $B$ uniformly at random rather than always using a single exemplar. Letting $P_B(x)$ denote a uniform distribution over $B$, we optimize
\begin{equation}
\label{eq:centloss_batch}
D_{B} \!=\! \underset{D \in \mathcal{D}}{\textrm{arg max}}\left(\!\!\
E_{x \sim P_B}[\log D(x)] + \\
E_{x' \sim P_\mathcal{X}}[\log{1\!-\!D(x')}]\right).
\end{equation}
Using the same argument as the single exemplar model, we can characterize the optimal discriminator for the noiseless $K$-exemplar model:

\begin{prop} (K-Exemplar Optimal Discriminator)
\label{prop:optimal_discrete_batch}
For a discriminator trained with K positives $\{x_1, ... x_K\}$ sampled uniformly, the optimal discriminator $D^*_B$ evaluated at any one of the positives $x$ satisfies
\[D^*_B(x) = \frac{1}{1+KP_\mathcal{X}(x)}.\]
\end{prop}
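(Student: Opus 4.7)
The plan is to mirror the proof of Proposition~\ref{prop:optimal_discrete}, exploiting the fact that the cross-entropy objective in Eq.~(\ref{eq:centloss_batch}) decomposes pointwise over $x$. First I would combine the two expectations into a single integral (or sum, in the discrete case), so that the contribution to the loss at each point $x$ is $P_B(x)\log D(x) + P_\mathcal{X}(x)\log(1-D(x))$. Since the discriminator class $\mathcal{D}$ is treated as unconstrained, the value $D(x)$ may be chosen independently at each $x$, so maximizing the overall objective reduces to maximizing this scalar expression in $D(x) \in [0,1]$ separately at each point.

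Next I would differentiate with respect to $D(x)$, set the derivative to zero, and solve, obtaining the natural generalization
\[
D^*_B(x) = \frac{P_B(x)}{P_B(x)+P_\mathcal{X}(x)},
\]
which recovers Proposition~\ref{prop:optimal_discrete} in the special case $K=1$ where $P_B = \delta_{x^*}$.

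Finally I would evaluate at an exemplar $x \in \{x_1,\dots,x_K\}$. Assuming the $K$ exemplars are distinct, the uniform sampling puts $P_B(x) = 1/K$ at each such point, so substituting and simplifying yields $D^*_B(x) = (1/K)/(1/K + P_\mathcal{X}(x)) = 1/(1+KP_\mathcal{X}(x))$, as claimed. The one subtlety I would flag but not belabor is the distinctness assumption: if an exemplar is repeated with multiplicity $m$, $P_B$ places mass $m/K$ there and the formula adjusts accordingly; in continuous spaces one also needs the same noise-smoothing convention as in Eq.~(\ref{eq:density_estimate_smooth}) to interpret the deltas. There is no genuinely hard step here — the result is essentially a cosmetic extension of the single-exemplar argument with $\delta_{x^*}$ replaced by the empirical distribution $P_B$.
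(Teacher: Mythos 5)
Your proposal is correct and follows essentially the same route as the paper: differentiate the pointwise objective $P_B(x)\log D(x) + P_\mathcal{X}(x)\log(1-D(x))$, set it to zero, and substitute $P_B(x)=1/K$ at an exemplar. Your explicit remarks on distinctness of the exemplars and the intermediate form $D^*_B(x)=P_B(x)/(P_B(x)+P_\mathcal{X}(x))$ are minor elaborations the paper leaves implicit.
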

\begin{proof}
Taking the derivative of Equation~(\ref{eq:centloss_batch}) with respect to $D^*_B(x)$, we obtain
\[\frac{1}{KD^*_B(x)} - \frac{P(x)}{1-D^*_B(x)} = 0.\]
Solving for $D^*_B(x)$ yields the desired result.
\end{proof}

Extensions to noisy versions of the K-exemplar model follow in exactly the same way as the single exemplar model, only changing the positive distribution from $\delEx(x)$ to $P_B(x)$.

\subsection{Task Descriptions}
\label{sec:task_descriptions}

In this section we describe the tasks used in our experiments. Sample images from these tasks are included in Figure~\ref{fig:tasks}.

\begin{figure*}[t]
\setlength{\unitlength}{0.5\textwidth}
\begin{picture}(1.0,1.0) \linethickness{0.5pt}
    \put(0.1,0.505){a) 2D Maze}
    \put(0.48,0.505){b) SparseHalfCheetah}
    \put(1.2,0.505){c) DoomMyWayHome+}
    \put(0.0,0.01){d) SwimmerGather}
    \put(0.5,0.01){e) Freeway}
    \put(1.0,0.01){f) Frostbite}
    \put(1.5,0.01){g) Venture}

    \put(0.0,0.55){\includegraphics[width=0.51\unitlength]{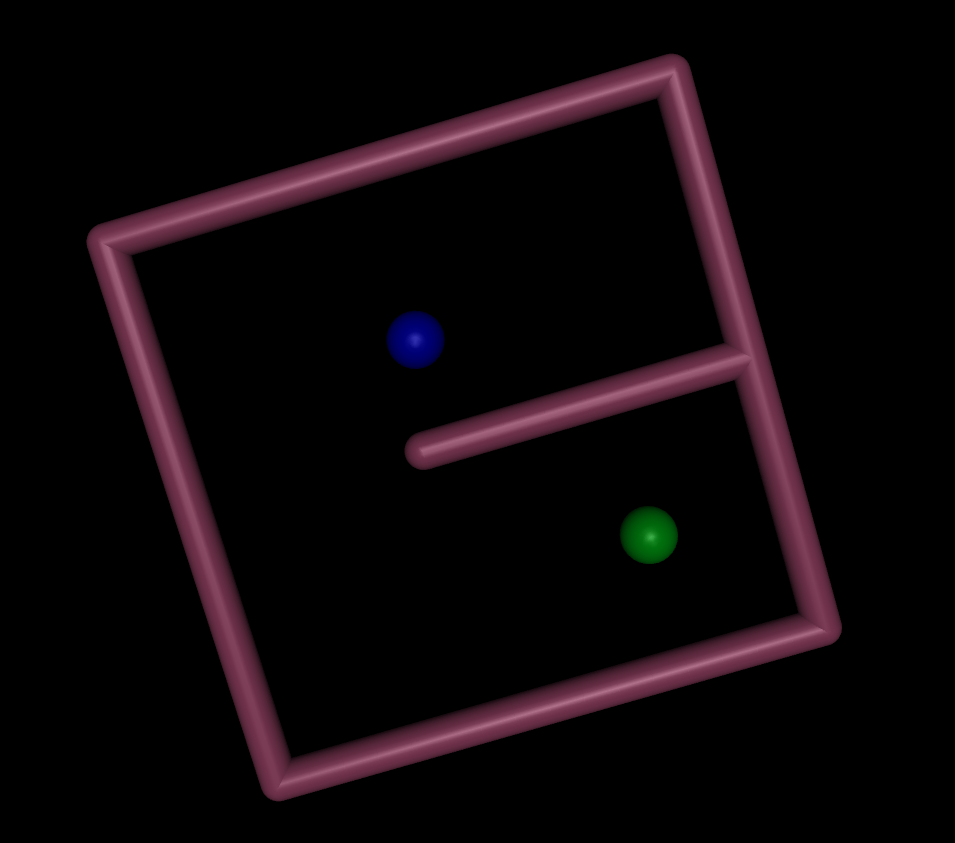}} 
    \put(0.48,0.55){\includegraphics[width=0.45\unitlength]{imgs/env_cheetah}} 
    \put(0.92,0.55){\includegraphics[width=0.495\unitlength]{imgs/map_doom_maze}} 
    \put(1.41,0.55){\includegraphics[width=0.6\unitlength]{imgs/env_doom}} 
   \put(0.0,0.05){\includegraphics[width=0.45\unitlength]{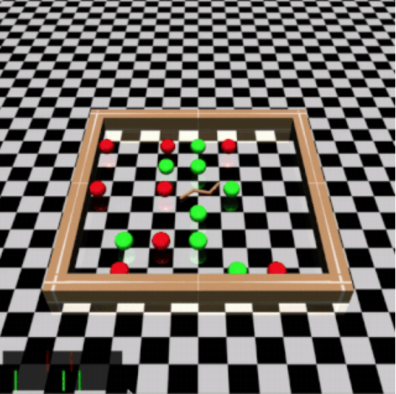}} 
   \put(0.5,0.05){\includegraphics[trim={0 4.5cm 0 5.5cm},clip,width=0.482\unitlength]{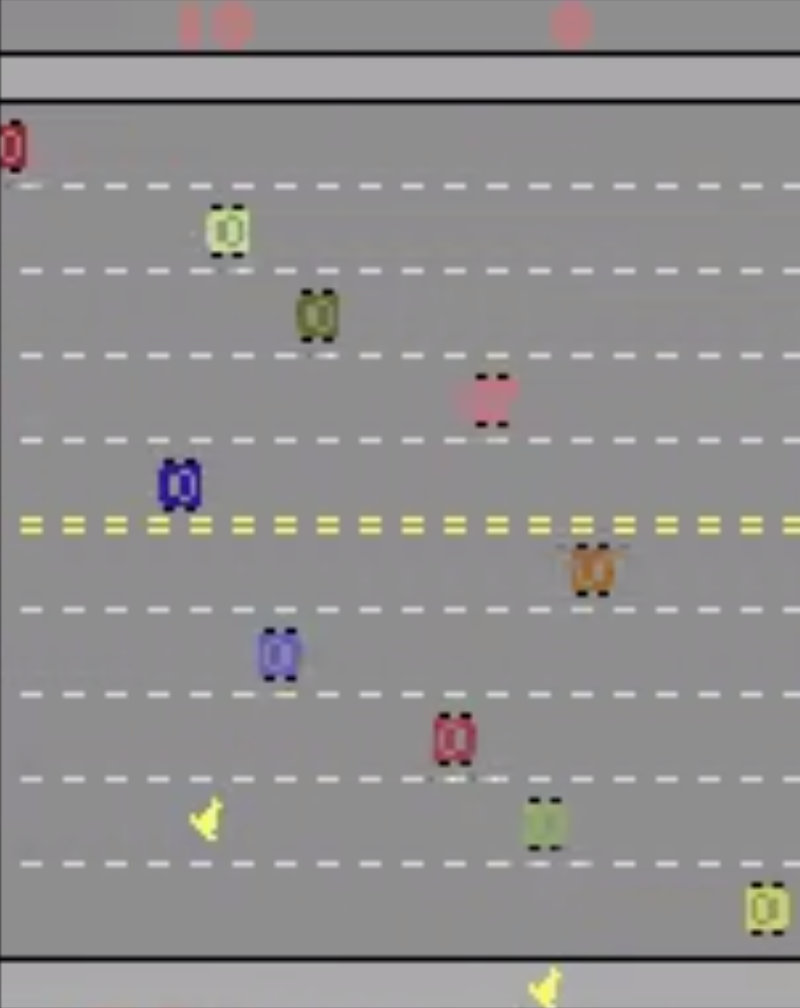}} 
   \put(1.0,0.05){\includegraphics[trim={0 0.5cm 0 6.5cm},clip,width=0.475\unitlength]{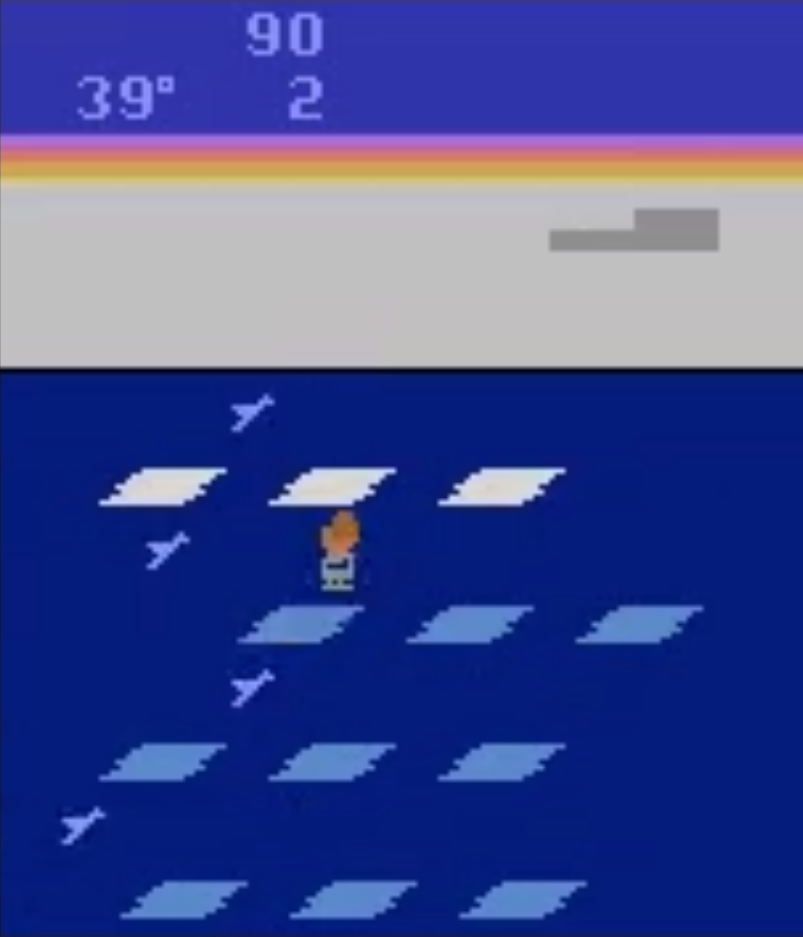}} 
   \put(1.5,0.05){\includegraphics[trim={0 0.0cm 0 4.0cm},clip,width=0.45\unitlength]{imgs/env_venture}} 

\end{picture}
\caption{Illustrations of several tasks used in our experiments.
\label{fig:tasks}
}
\end{figure*}

\noindent \textbf{2D Maze.}
This task involves navigating through a 2D maze, using the (x,y) coordinate of the agent as the observation. The challenge stems from the sparse reward, which is only obtained in a small box around the goal. The agent therefore has to figure out how to reach novel parts of the maze in order to eventually find the reward region.

\noindent \textbf{SparseHalfCheetah.}  
This task involves making a 6-DoF robot run forward as fast as possible. However, this task has been modified to have a sparse reward as done by ~\citet{vime-houthooft-16}, so that the agent only receives reward upon reaching a certain position threshold, and receives a constant reward afterwards.

\noindent \textbf{SwimmerGather.}
This locomotion task, initially proposed as a hierarchical task by ~\citet{benchmarking-rocky}, involves navigating a 3-link snake-like robot to collect green or red pellets. The agent is rewarded for collecting green pellets and penalized for red ones.

\noindent \textbf{Doom (MyWayHome+).}  This task involves navigating an agent through a maze to find a vest that is located in one of the rooms. The observations consist only of visual feedback, and the reward is sparse and only given when the vest is obtained. This is a slightly modified version of the OpenAI Gym task where we initialize the agent in the furthest room from the vest to create a sparse reward task. In Figure~\ref{fig:tasks}, the map of the environment is shown, with the agent starting at the blue dot and the goal at the green dot. The input is resized to an RGB 32 x 32 image.

\noindent \textbf{Freeway.}  This game involves navigating an agent across a highway with moving cars, which push the agent back when touched. The reward is sparse and the agent scores a 1 when it makes it across the highway.

\noindent \textbf{Frostbite.}  This game involves an agent jumping across ice platforms floating across a river. The reward is dense in that the agent receives reward when it jumps on a platform, but higher scores requires the agent to navigate to other stages which generally requires exploration.

\noindent \textbf{Venture.}  This game involves an agent navigating an agent into multiple rooms, where reward is received upon picking up certain objects. The agent must avoid death from touching wandering enemies. We show example  images with low and high bonuses given by our algorithm on this task in Figure~\ref{fig:venture_bonus}.

\begin{figure}
\setlength{\unitlength}{0.5\textwidth}
\begin{picture}(1.0,1.0) \linethickness{0.5pt}
    \put(0.3,0.0){\includegraphics[width=0.4\unitlength]{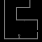}} 
    \put(0.8,0.0){\includegraphics[width=0.4\unitlength]{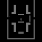}} 
    \put(1.3,0.0){\includegraphics[width=0.4\unitlength]{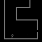}} 
    \put(0.3,0.43){\includegraphics[width=0.4\unitlength]{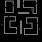}} 
   \put(0.8,0.43){\includegraphics[width=0.4\unitlength]{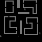}} 
   \put(1.3,0.43){\includegraphics[width=0.4\unitlength]{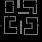}} 
\end{picture}
\caption{Top: 3 of the lowest scoring images on Venture early during training. These are typically pictures of the agent in the "overworld" where it spends most of its time. Bottom: 3 of the highest scoring images, which are typically when the agent enters one of the many rooms with reward. Images are grayscale due to preprocessing of the image.
\label{fig:venture_bonus}
}
\end{figure}

\subsection{Experiment Hyperparameters}

\subsubsection{Policy Model Parameters}
We used an identical fully connected policy architecture across all non-image tasks, and a convolutional architecture for the image task.

For non-image tasks, we used a 2-layer neural network with 32 hidden units per layer, and relu nonlinearities.

For Doom, we used 2 convolutional layers (16 4x4 filters, stride 2) followed by 2 fully connected layers with 32 units each. All nonlinearities were relus. We resize the input screen to a RGB 32 x 32 image. For Atari, we used 2 convolutional layers (32 8x8 filters, stride 4, 16 4x4 filter stride 2) followed by 2 fully connected layers with 256 units each. All nonlinearities were relus. For Atari we use the last 4 frames each resized to a grayscale 42 x 42 image.

\subsubsection{Exemplar Model Parameters}
We used an identical fully connected exemplar architecture across all non-image tasks, and a convolutional architecture for the image task.

For non-image tasks, we used a 2-layer shared neural network with $\tanh$ nonlinearities and 16 units per layer. The final unshared layer was a linear layer.

For image-based tasks, we used a shared network consisting of 2 convolutional layers (16 4x4 filters, stride 2) followed by 2 fully connected layers with 16 units each. The convolutional layers used relu nonlinearities, and the fully connected used $\tanh$. The shared network architecture is identical to the policy architecture. The final unshared layer was a linear layer.

We also found it useful to lower the learning rate for the shared network as it has many more gradients backpropogating through it than the unshared layer. Thus, we optimized our model using ADAM with a learning rate of $5*10^{-4}$ for the shared layers and $1*10^{-3}$ for the unshared layers.

\subsubsection{Amortized Model Parameters}
For each encoder we use a 2-layer neural network with 32 hidden units per layer and tanh nonlinearities which outputs the mean and log variance of the latent representation of size 16. The latent codes of the encoder are concatenated and fed into the discriminator which is another 2-layer neural network with 32 hidden units per layer and tanh nonlinearities.

For image-based tasks, we preprocess the input with 2 convolutional layers (16 4x4 filters, stride 2) before feeding the input into the encoders. For the encoders and discriminator we use the same architecture as stated above except we use 64 hidden units and a latent size of 32.

We use a learning rate of $1*10^{-4}$ and optimize the model with ADAM. We found it important to tune the weight on the KL divergence loss which affects how well the discriminator can over or under fit.
\subsubsection{Task Specific \algname  Parameters}

We found it best to tune the exploration bonus weight $\beta$ to match the magnitude of the reward of the task. We used the following \algname hyperparameters for each task, which were obtained via a rough grid search over possible values:

\noindent \textbf{2D Maze.}
We use K-Exemplar (K=5) and an exploration bonus weight of 1.0. For the amortized model we use an exploration bonus weight of 0.01 and KL divergence weight of 0.01.

\noindent \textbf{HalfCheetah.}  
We use K-Exemplar (K=5) and an exploration bonus weight of 0.001. For the amortized model we use an exploration bonus weight of 0.001 and KL divergence weight of 0.1.

\noindent \textbf{SwimmerGather.}
We use single exemplars with an exploration bonus weight of 1.0. For the amortized model we use an exploration bonus weight of $1*10^{-4}$ and KL divergence weight of 10.

\noindent \textbf{Doom (MyWayHome).}  
We use K-Exemplar (K=5), an exploration bonus weight of $1*10^{-4}$, and entropy bonus of $1*10^{-5}$. For the amortized model we use an exploration bonus weight of $1*10^{-4}$ and KL divergence weight of 0.01.

\noindent \textbf{Freeway}  
For the amortized model we use an exploration bonus weight of $1*10^{-5}$ and KL divergence weight of 0.1.

\noindent \textbf{Frostbite}  
For the amortized model we use an exploration bonus weight of 0.001 and KL divergence weight of 0.1.

\noindent \textbf{Venture}  
For the amortized model we use an exploration bonus weight of $1*10^{-4}$ and KL divergence weight of 0.001.

\subsection{Learning Curves}
\label{sec:learning_curves}
\begin{figure}[H]
\setlength{\unitlength}{1.0\textwidth}
\includegraphics[width=0.99\unitlength]{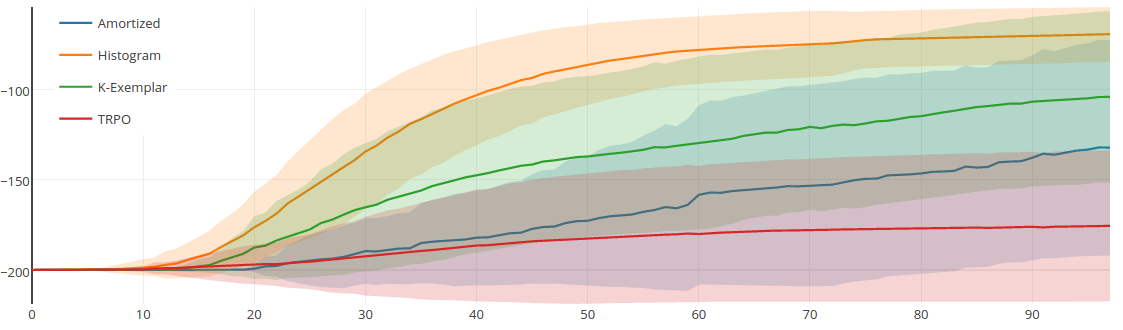}
\centering
\caption{2D Maze}
\end{figure}
\begin{figure}[H]
\setlength{\unitlength}{1.0\textwidth}
\includegraphics[width=0.99\unitlength]{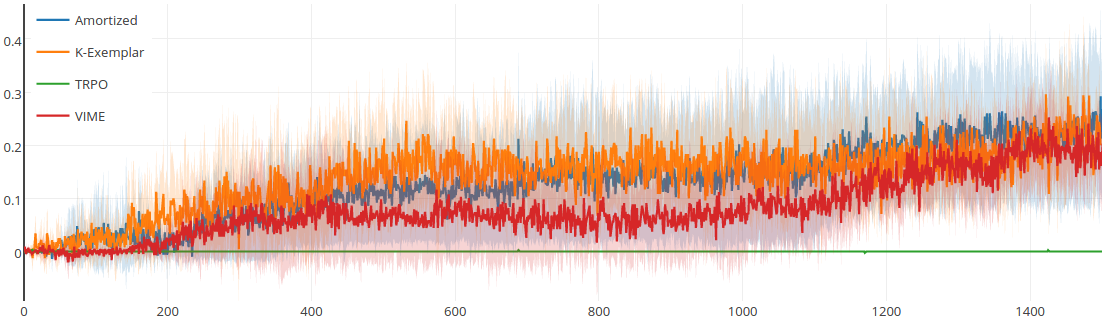}
\centering
\caption{Swimmer Gather}
\end{figure}
\begin{figure}[H]
\setlength{\unitlength}{1.0\textwidth}
\includegraphics[width=0.99\unitlength]{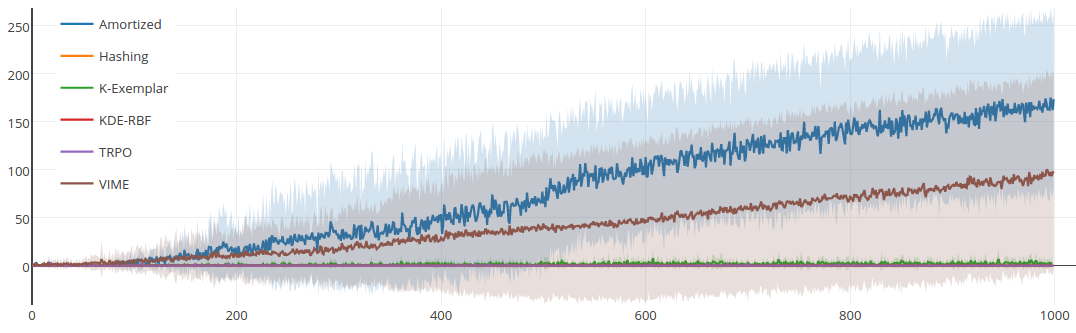}
\centering
\caption{SparseHalfCheetah}
\end{figure}
\begin{figure}[H]
\setlength{\unitlength}{1.0\textwidth}
\includegraphics[width=0.99\unitlength]{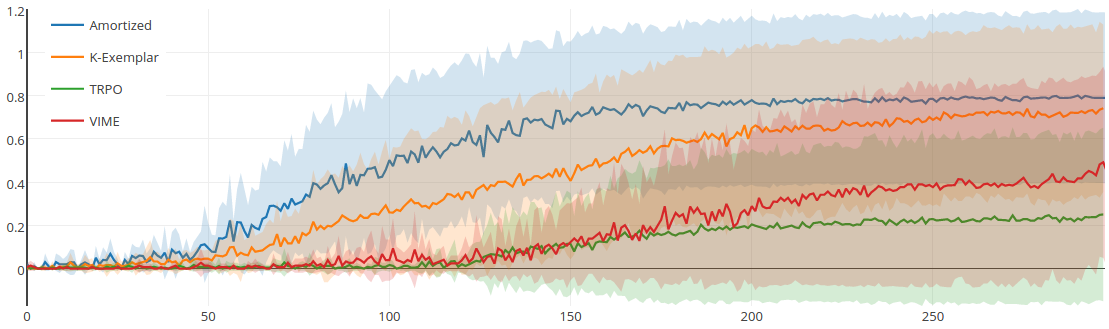}
\centering
\caption{DoomMyWayHome+}
\end{figure}
\begin{figure}[H]
\setlength{\unitlength}{1.0\textwidth}
\includegraphics[width=0.99\unitlength]{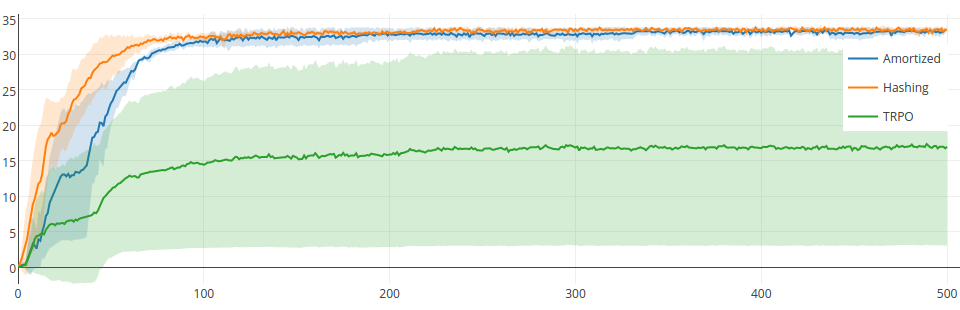}
\centering
\caption{Freeway}
\end{figure}
\begin{figure}[H]
\setlength{\unitlength}{1.0\textwidth}
\includegraphics[width=0.99\unitlength]{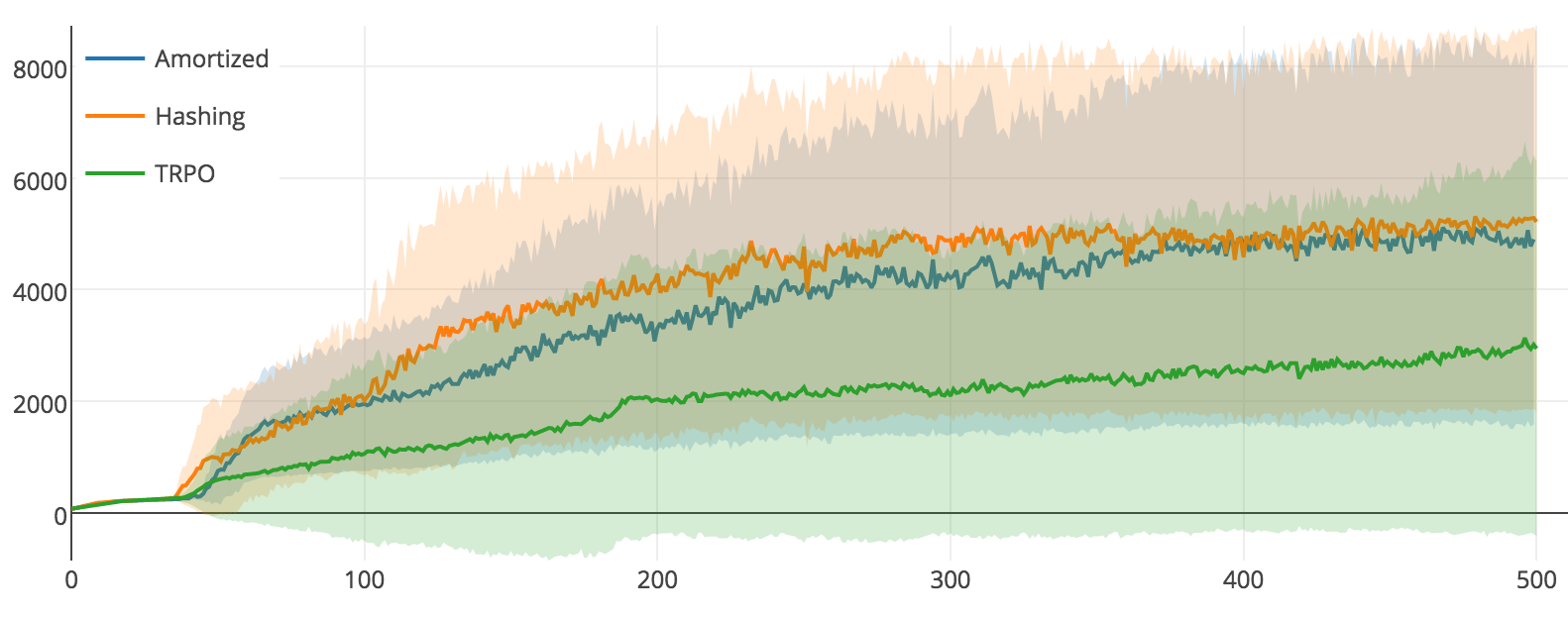}
\centering
\caption{Frostbite}
\end{figure}
\begin{figure}[H]
\setlength{\unitlength}{1.0\textwidth}
\includegraphics[width=0.99\unitlength]{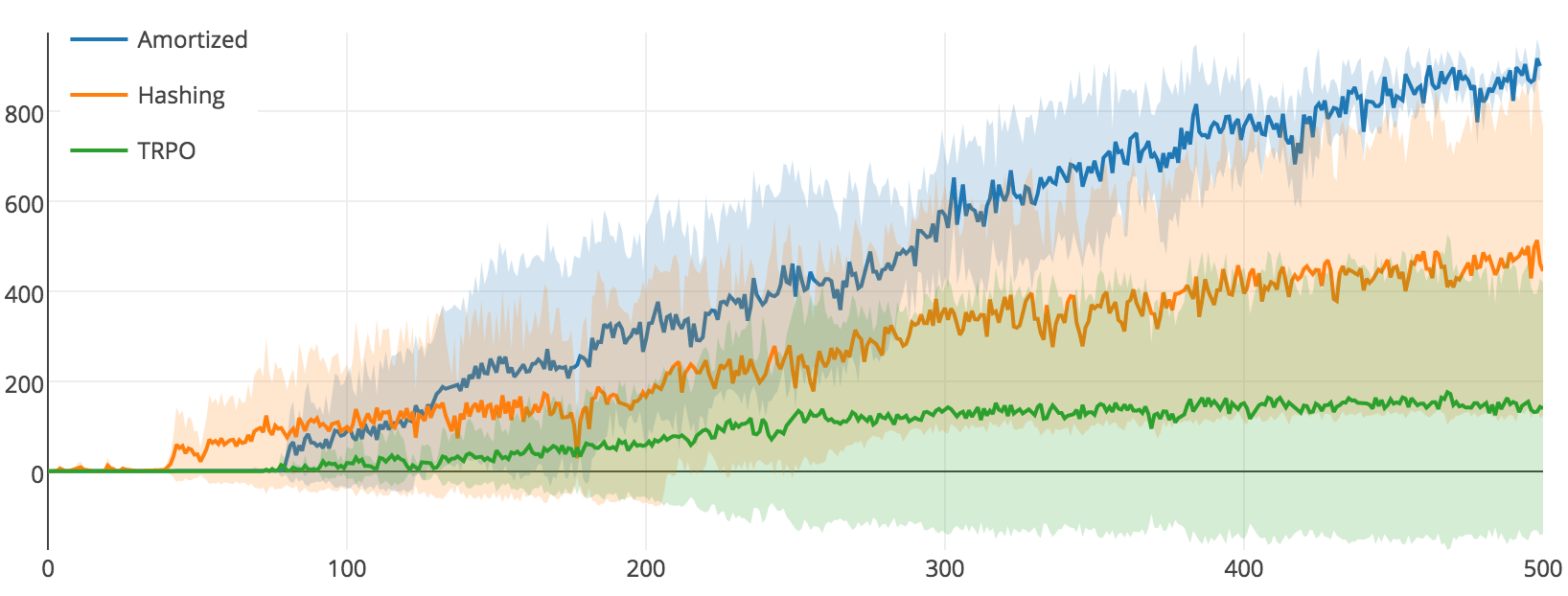}
\centering
\caption{Venture}
\end{figure}

    

    


\end{document}